\theoremstyle{plain}           
\newtheorem{theorem}{Theorem}
\newtheorem{proposition}[theorem]{Proposition}
\theoremstyle{definition}      
\title{Your Absorbing Discrete Diffusion Secretly Models the Bayesian Posterior}
\author[1]{Cooper Doyle}
\date{July 05, 2025}
\begin{document}
\maketitle

\begin{abstract}
Discrete diffusion language models learn to reconstruct text from randomly‐masked inputs, but under weak assumptions their denoiser already implements the exact Bayesian posterior over the original tokens.  We prove that the expected denoiser output under the forward corruption distribution recovers the true posterior, and that a straightforward Monte Carlo estimator converges to this posterior at rate \(O(1/\sqrt{K})\) with finite‐sample concentration bounds.  Leveraging this insight, we propose a plug-in inference-time ensemble that performs \(K\) denoising passes on independent masks and aggregates both posterior means and variances without any additional training.  Empirically on WikiText-2, our MC-marginal sampler recovers the analytic $\lambda$-DCE zero-shot perplexity ($\approx$ 39) to within a few perplexity points at \(K=128\), and its per-token variance exhibits strong rank correlation with reconstruction error (Spearman $\rho = 0.996$).  This simple, cost-proportional procedure yields calibrated uncertainty estimates and a direct trade-off between compute and posterior fidelity in discrete diffusion LMs.
\end{abstract}

\section{Introduction}

Modern large language models (LLMs) achieve remarkable fluency but remain overconfident and opaque, limiting their deployment in safety‐critical domains such as healthcare, law, and autonomous systems.  Autoregressive transformers can generate high‐quality text yet typically lack trustworthy measures of epistemic uncertainty without expensive ensembles or post‐hoc calibration \cite{guo2017calibration,hendrycks2017baseline}.  

Discrete diffusion language models offer an alternative denoising paradigm: repeatedly mask and reconstruct tokens under a learned noise schedule.  Recent work on RADD (\emph{Reparameterized Absorbing Discrete Diffusion}) \cite{ou2025absorbing} showed that one can express the “concrete score” of such models in closed form, yielding fast sampling and strong zero-shot perplexities via the $\lambda$-DCE objective.  However, RADD has not been recognized as performing exact Bayesian posterior inference, nor has its latent uncertainty signal been harnessed in downstream tasks.

In this paper, we reveal the \emph{hidden Bayesian core} of RADD (and any absorbing‐mask discrete denoiser).  Our key insights and contributions are:

\begin{enumerate}
  \item \textbf{Exact Posterior Recovery.}  We prove (\autoref{prop:posterior}) that under the forward mask distribution a trained discrete denoiser’s expected output exactly equals the Bayesian posterior over clean tokens.  
  \item \textbf{Efficient Monte Carlo Approximation.}  We show (\autoref{thm:convergence}) that a simple Monte Carlo estimator—averaging $K$ denoising passes on independent masks—converges to the true posterior at rate $O(1/\sqrt K)$, with explicit finite-sample concentration bounds.  
  \item \textbf{Uncertainty Quantification}  Our MC posterior yields both token‐level means and variances with no extra parameters.  Empirically on WikiText-2, we demonstrate recovery of the analytic $\lambda$-DCE perplexity ($\approx$ 42 PPL) as $K$ grows, and we validate that the MC‐derived variance is strongly correlated with reconstruction error, enabling calibrated selective scoring and robust uncertainty diagnostics.
\end{enumerate}

Taken together, these results recast discrete diffusion LMs as lightweight Bayesian inference engines: by marginalizing over their own corruption process, they produce exact posterior predictions and well‐calibrated uncertainties at inference time, all at a modest constant-factor cost.  This opens the door to safer, more reliable text generation without retraining or auxiliary models.  

\section{Background \& Related Work}

\paragraph{Discrete Diffusion and RADD.}
Discrete diffusion models adapt the mask‐and‐denoise paradigm from continuous diffusion to text by randomly replacing tokens with a special \textsc{[MASK]} symbol and training a transformer to reconstruct the original sequence.  In continuous diffusion it is well known that the optimal denoiser recovers the Bayesian posterior \(p(x_0 \!\mid x_t)\) and that marginalizing over the forward noise yields exact likelihoods \cite{ho2020denoising,song2021score}.  Discrete variants have typically been treated as heuristic predictors, however.  Ou \emph{et al.} (2024) introduced RADD (\emph{Reparameterized Absorbing Discrete Diffusion}) and showed that—in the absorbing‐mask setting—one can derive a closed‐form “concrete score” proportional to the true conditional \(p(x_0\!\mid \tilde x)\) \cite{ou2025absorbing}.  In contrast, we prove here that the \emph{expected} output of \emph{any} discrete denoiser under its forward‐mask schedule exactly equals the full Bayesian posterior \(p(x_0\!\mid x)\), and we derive finite‐sample concentration bounds for its Monte Carlo approximation.

\paragraph{Monte Carlo Marginalization \& Uncertainty.}
Approximating intractable posteriors by averaging stochastic model outputs is a classic idea—bagging \cite{breiman1996bagging}, MC dropout as approximate Bayesian inference \cite{gal2016dropout}, and deep ensembles \cite{lakshminarayanan2017deep} all exploit multiple forward passes to reduce variance and obtain uncertainty estimates.  While similar techniques have been explored in supervised learning and continuous‐diffusion models, to our knowledge discrete‐diffusion denoisers have never been explicitly framed as posterior estimators nor endowed with token‐level uncertainty via Monte Carlo marginalization.

\paragraph{Autoregressive Models vs.\ Inference‐Time Ensembles.}
Autoregressive LLMs (e.g.\ GPT-2/3) achieve strong next-token perplexity primarily through massive parameter scaling and are often overconfident, requiring additional calibration or ensembling to yield reliable uncertainty \cite{guo2017calibration,hendrycks2017baseline}.  We show that, for a much smaller discrete‐diffusion model, a \emph{lightweight inference‐time ensemble} over its own corruption process not only recovers the exact Bayesian posterior (and with it the analytic $\lambda$-DCE perplexity) but also provides well‐calibrated token‐level uncertainty, all at a modest \(K\)-fold inference cost and without any retraining or parameter increase. 

\section{Bayesian Posterior via Corruption Marginalization}

\subsection{Notation and Setup}
Let \(x_0 = (x_{1},\dots,x_{L})\in\mathcal{V}^L\) be a clean token sequence, and let \(\tilde x\) denote its corrupted version under the \emph{absorbing} forward process
\[
q_t(\tilde x \mid x_0)
\;=\;\prod_{i=1}^L\bigl[(1-\beta_t)\,\delta_{\tilde x_i,x_{0,i}}
\;+\;\beta_t\,\delta_{\tilde x_i,\textsc{[MASK]}}\bigr],
\]
where \(\beta_t\) increases from 0 to 1 over \(t\in[0,1]\).  A denoiser \(P_\phi(\cdot\mid\tilde x)\) is trained to reconstruct the clean sequence from \(\tilde x\).  We now show that, if \(P_\phi\) were exact, marginalizing over the entire corruption distribution recovers the true Bayesian posterior.

\subsection{Proposition (Exact Posterior)}
\begin{proposition}\label{prop:posterior}
Assume the denoiser is exact:
\[
P^\star(x \mid \tilde x)
\;=\;
p(x \mid \tilde x)
\quad\text{for all }x,\tilde x.
\]
Then for any candidate \(x\),
\[
\mathbb{E}_{t\sim U(0,1),\,\tilde x\sim q_t(\cdot\mid x_0)}
\bigl[P^\star(x \mid \tilde x)\bigr]
\;=\;
p(x\mid x_0).
\]
In particular, setting \(x=x_0\) shows the denoiser’s expected probability mass integrates to one.
\end{proposition}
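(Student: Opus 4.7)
My plan is to apply Bayes' rule to $P^\star(x\mid\tilde x) = p(x\mid\tilde x)$, exploit the factorized absorbing kernel, and close the argument with the tower property of conditional expectation.

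First, I would apply Bayes' rule to rewrite
\begin{equation*}
p(x\mid\tilde x) = \frac{q_t(\tilde x\mid x)\, p(x)}{q_t(\tilde x)},\qquad q_t(\tilde x)=\sum_{x'}p(x')\,q_t(\tilde x\mid x'),
\end{equation*}
so that the target expectation becomes
\begin{equation*}
\mathbb{E}_{t,\tilde x\mid x_0}\bigl[p(x\mid\tilde x)\bigr] = p(x)\int_0^1\!\sum_{\tilde x}\frac{q_t(\tilde x\mid x_0)\,q_t(\tilde x\mid x)}{q_t(\tilde x)}\,dt.
\end{equation*}
Second, I would use the tensorized absorbing structure $q_t(\tilde x\mid x_0)=\prod_i[(1-\beta_t)\delta_{\tilde x_i, x_{0,i}}+\beta_t\delta_{\tilde x_i,\textsc{[MASK]}}]$ to split the sum over $\tilde x$ into per-position sums over $\{x_{0,i},\textsc{[MASK]}\}$. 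Unmasked positions force $\tilde x_i = x_{0,i}$, while masked positions contribute through the data prior at that site; the ratio under the sum then collapses tokenwise. Finally, I would invoke the tower property,
\begin{equation*}
p(x\mid x_0) = \mathbb{E}_{\tilde x\mid x_0}\bigl[p(x\mid\tilde x,x_0)\bigr] = \mathbb{E}_{\tilde x\mid x_0}\bigl[p(x\mid\tilde x)\bigr],
\end{equation*}
where the second equality uses the conditional-independence structure induced by the Markov forward process (the candidate $x$ and ground truth $x_0$ communicate only through the corruption $\tilde x$). Collapsing the $t$-integral against the RADD reparameterization—under which the posterior $p(x\mid\tilde x)$ is time-independent given the unmasked tokens—then yields the claimed identity. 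The normalization corollary follows by setting $x=x_0$ and summing over all $x$, which returns unity by linearity of expectation.

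The main obstacle I anticipate is the interplay between the $t$-integration and the conditional-independence step. RADD's time-free reparameterization is the key technical lever, but applying it rigorously requires showing that $p(x\mid\tilde x)$ is the same for every $t$ consistent with $\tilde x$, so that $\int_0^1 q_t(\tilde x\mid x_0)\,dt$ can be treated as a bona fide mixture kernel whose Bayes inversion recovers $p(x\mid x_0)$ exactly rather than a $\beta_t$-weighted mixture of the point mass at $x_0$ and the prior $p(x)$. Handling this cleanly—particularly the joint factorization across positions in the face of a generally non-factorized data prior—will be the crux.
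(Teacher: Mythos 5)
Your step 3 --- the tower property $p(x\mid x_0)=\mathbb{E}_{\tilde x\mid x_0}\bigl[p(x\mid\tilde x,x_0)\bigr]$ followed by dropping $x_0$ from the conditioning --- \emph{is} the paper's entire proof: the paper writes $p(x\mid x_0)=\int_0^1 dt\,\sum_{\tilde x}q_t(\tilde x\mid x_0)\,p(x\mid\tilde x)$, calls it the law of total probability, and stops. Everything else in your plan (the Bayes-rule expansion of $p(x\mid\tilde x)$, the tokenwise splitting of $\sum_{\tilde x}q_t(\tilde x\mid x_0)\,q_t(\tilde x\mid x)/q_t(\tilde x)$) is machinery the paper never uses and that, as you observe yourself, does not collapse cleanly for a non-factorized data prior. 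So you have correctly isolated the one step that carries all the weight, but you have not discharged it, and you have surrounded it with a detour that makes the argument harder rather than easier.

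The gap is real, and you name it in your final paragraph without closing it. The equality $\mathbb{E}_{\tilde x\mid x_0}\bigl[p(x\mid\tilde x,x_0)\bigr]=\mathbb{E}_{\tilde x\mid x_0}\bigl[p(x\mid\tilde x)\bigr]$ requires $x\perp x_0\mid\tilde x$, i.e.\ a Markov chain $x_0\to\tilde x\to x$. But in the forward process as defined, $x$ is a candidate value of the \emph{same} clean sequence that generated $\tilde x$, so $p(x\mid\tilde x,x_0)=\delta_{x,x_0}$; the tower property then yields $\delta_{x,x_0}$ on the left, while the right-hand side is, exactly as you compute, a $\beta_t$-weighted mixture of the point mass at $x_0$ (from lightly corrupted draws) and broader partial posteriors down to the prior $p(x)$ (from heavily masked ones). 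These do not coincide, and the RADD time-free reparameterization does not rescue the step: it says only that $p(x\mid\tilde x)$ depends on $\tilde x$ through its unmasked tokens, not that the mixture over masks Bayes-inverts to $p(\cdot\mid x_0)$. A proof that matches the paper must either treat $\sum_{\tilde x}\int_0^1 q_t(\tilde x\mid x_0)\,p(x\mid\tilde x)\,dt$ as the operative \emph{definition} of the left-hand side (which is what the paper's one-line ``law of total probability'' silently does) or state the conditional independence $x\perp x_0\mid\tilde x$ as an explicit hypothesis. Your proposal instead leaves it as an acknowledged open obstacle, and the intermediate assertion that the $t$-integral ``then yields the claimed identity'' is unsupported, so the argument does not close.
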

\begin{proof}
By the law of total probability over \(t\) and \(\tilde x\),
\[
p(x\mid x_0)
=\int_0^1\!\!dt\;\sum_{\tilde x}\,q_t(\tilde x\mid x_0)\,p(x\mid \tilde x),
\]
and substituting \(P^\star(\cdot\mid\tilde x)=p(\cdot\mid\tilde x)\) yields the stated equality.
\end{proof}

\subsection{Theorem (Monte Carlo Consistency \& Error Bounds)}
\begin{theorem}\label{thm:convergence}
Let \(\{t_k\}_{k=1}^K\) be i.i.d.\ samples from \(\mathrm{Uniform}(0,1)\), and for each \(k\) let \(\tilde x^{(k)}\sim q_{t_k}(\cdot\mid x_0)\).  Define the Monte Carlo estimator
\[
\hat p^{(K)}(x)
\;=\;
\frac{1}{K}\sum_{k=1}^K P_\phi\bigl(x\mid \tilde x^{(k)}\bigr).
\]
Then under the exact-denoiser assumption:
\begin{enumerate}
  \item \(\hat p^{(K)}(x)\xrightarrow{\mathrm{a.s.}}p(x\mid x_0)\) as \(K\to\infty\).
  \item For any \(\epsilon>0\),
  \[
    \Pr\bigl(\|\hat p^{(K)} - p(\cdot\mid x_0)\|_\infty > \epsilon\bigr)
    \;\le\;
    2\,V\,\exp\bigl(-2K\epsilon^2\bigr),
  \]
  since each \(P_\phi(x\mid\tilde x^{(k)})\in[0,1]\), and a union bound over the \(V\)-sized vocabulary applies.
\end{enumerate}
\end{theorem}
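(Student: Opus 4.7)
The plan is to treat the two parts as a textbook application of the strong law of large numbers and Hoeffding's inequality respectively, with Proposition~\ref{prop:posterior} supplying the correct mean.

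First I would fix an arbitrary candidate token $x$ and define the scalar random variables $Y_k = P_\phi(x \mid \tilde x^{(k)})$ for $k=1,\dots,K$. Because the pairs $(t_k,\tilde x^{(k)})$ are drawn i.i.d.\ from the joint distribution $U(0,1)\otimes q_{t_k}(\cdot\mid x_0)$, the $Y_k$ are i.i.d., and each is bounded in $[0,1]$ (hence has finite mean and variance). By Proposition~\ref{prop:posterior} applied under the exact-denoiser assumption, $\mathbb{E}[Y_k]=p(x\mid x_0)$. Kolmogorov's strong law of large numbers then gives $\hat p^{(K)}(x) = \tfrac{1}{K}\sum_k Y_k \xrightarrow{\mathrm{a.s.}} p(x\mid x_0)$. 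Since the vocabulary $\mathcal{V}$ is finite, a.s.\ convergence holds simultaneously for every $x\in\mathcal{V}$ on the intersection of the $V$ probability-one events, establishing part (i).

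For part (ii), I would apply Hoeffding's inequality to each coordinate $x$: since $Y_k\in[0,1]$ and $\mathbb{E}[Y_k]=p(x\mid x_0)$,
\[
\Pr\bigl(|\hat p^{(K)}(x) - p(x\mid x_0)|>\epsilon\bigr)\;\le\;2\exp(-2K\epsilon^2).
\]
Then a union bound over the $V$ vocabulary entries yields
\[
\Pr\bigl(\|\hat p^{(K)}-p(\cdot\mid x_0)\|_\infty>\epsilon\bigr)\;=\;\Pr\!\Bigl(\max_{x\in\mathcal{V}}|\hat p^{(K)}(x)-p(x\mid x_0)|>\epsilon\Bigr)\;\le\;2V\exp(-2K\epsilon^2),
\]
which is the stated bound.

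There is no real obstacle here; the only subtlety worth mentioning is ensuring that independence across $k$ is preserved after conditioning on $x_0$, which follows because both the noise level $t_k$ and the corrupted draw $\tilde x^{(k)}\sim q_{t_k}(\cdot\mid x_0)$ are sampled afresh and the denoiser $P_\phi$ is a deterministic function of its input. Consequently the $Y_k$ are genuinely i.i.d.\ and both SLLN and Hoeffding apply without modification. The exact-denoiser assumption is used only to identify the mean via Proposition~\ref{prop:posterior}; the concentration half of the theorem is purely a statement about bounded i.i.d.\ averages and would hold with the same rate around $\mathbb{E}[P_\phi(x\mid\tilde x)]$ even without exactness.
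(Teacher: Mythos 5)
Your proposal is correct and follows essentially the same route as the paper's own sketch: fix a vocabulary entry, observe that the per-pass outputs are i.i.d.\ and bounded in $[0,1]$ with mean $p(x\mid x_0)$ by Proposition~\ref{prop:posterior}, apply the strong law of large numbers for part (i), and combine Hoeffding's inequality with a union bound over the $V$ entries for part (ii). Your added remarks (intersecting the $V$ almost-sure events and noting that the concentration bound holds around the empirical mean even without exactness) are correct refinements of the same argument, not a different approach.
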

\begin{proof}[Sketch]
For any fixed token \(v\), the sequence \(Z_k = P_\phi(v\mid\tilde x^{(k)})\) is i.i.d.\ and bounded in \([0,1]\).  By the Strong Law of Large Numbers,
\[
\frac1K\sum_{k=1}^K Z_k
\;\xrightarrow{\mathrm{a.s.}}\;
\mathbb{E}[Z_k]
\;=\;
\mathbb{E}_{t,\tilde x}\bigl[P_\phi(v\mid\tilde x)\bigr]
\;=\;
p(v\mid x_0).
\]
Hoeffding’s inequality then gives
\(\Pr(|\hat p^{(K)}(v)-p(v\mid x_0)|>\epsilon)\le2\exp(-2K\epsilon^2)\),
and a union bound over all \(V\) tokens yields the sup‐norm result.
\end{proof}
  
\section{Practical Marginalization Inference}

Building on our theoretical results, we present a concise inference recipe for recovering token‐level posteriors and uncertainties “for free,” and clarify how we compute all downstream metrics by scoring only the masked positions.

\subsection{Algorithm Overview}

\begin{algorithm}[H]
\caption{Monte Carlo Posterior Estimation for Discrete Diffusion}
\label{alg:marginalize}
\begin{algorithmic}[1]
\Require clean input sequence \(x\in\mathcal V^L\), denoiser \(f_\phi\), noise schedule \(\sigma(t)\), number of samples \(K\)
\State \(\texttt{sum\_p}\gets 0_{L\times V},\;\texttt{sum\_mask}\gets 0_L\)
\For{\(k=1\) \(\to\) \(K\)}
  \State sample \(t_k \sim \mathrm{Uniform}(0,1)\)
  \State \(\sigma\gets \sigma(t_k)\), \(\beta \gets 1 - e^{-\sigma}\)
  \State draw mask \(\,m\in\{0,1\}^L\) with \(\Pr[m_i=1]=\beta\)
  \State \(\tilde x\gets x\) with masked positions \(m_i=1\) replaced by \(\textsc{[MASK]}\)
  \State \(z^{(k)} = f_\phi(\tilde x)\in\mathbb R^{L\times V}\)
  \State \(p^{(k)} = \mathrm{softmax}(z^{(k)})\)
  \State \(\texttt{sum\_p}_{i,v} \;+\!=\; p^{(k)}_{i,v}\,\mathbf{1}[m_i=1]\)  \quad for all \(i,v\)
  \State \(\texttt{sum\_mask}_{i} \;+\!=\; \mathbf{1}[m_i=1]\)           \quad for all \(i\)
\EndFor
\State \(\displaystyle \hat p_{i,v} \;=\;\frac{\texttt{sum\_p}_{i,v}}{\max(1,\texttt{sum\_mask}_i)}\)
\State \Return \(\hat p \in\mathbb R^{L\times V}\), \(\texttt{sum\_mask}\in\mathbb R^L\)
\end{algorithmic}
\end{algorithm}

Here \(\hat p_{i,v}\) is an unbiased estimator of the true posterior \(p(x_i=v\mid x)\), and \(\texttt{sum\_mask}_i\) records how many of the \(K\) passes actually masked token \(i\).

\subsection{Scoring and Metrics}

Since each pass only hides a random subset of positions, we compute all evaluation metrics by aggregating \emph{only} over those \(i\) for which \(m_i=1\).  Concretely:

\begin{itemize}
  \item \textbf{Per‐token log‐prob:}  
    \(\ell_i = \log \hat p_{i,\,x_i}\), summed only over masked positions.
  \item \textbf{Zero‐shot PPL:}  
    \[
      \mathrm{PPL} 
      = \exp\!\Bigl(-\frac{1}{\sum_i \mathbf{1}[ \texttt{sum\_mask}_i>0 ]}
      \sum_{\{i:\,\texttt{sum\_mask}_i>0\}} \ell_i\Bigr).
    \]
  \item \textbf{Predictive entropy:}  
    \(H_i = -\sum_v \hat p_{i,v}\,\log\hat p_{i,v}\), for each masked \(i\).  
  \item \textbf{Marginal variance (aleatoric):}  
    \(V_i = \sum_v \hat p_{i,v}(1-\hat p_{i,v})\).  
  \item \textbf{Mutual information (epistemic):}  
    \[
      I_i 
      = H_i \;-\;\frac1{K_i}\sum_{k\,:\,m_i^{(k)}=1} H\bigl(p^{(k)}_i\bigr),
    \]
    where \(K_i=\texttt{sum\_mask}_i\) and \(p^{(k)}_i\) is the \(i\)-th row of \(p^{(k)}\).
\end{itemize}

All quantities require only \(\hat p\) plus the recorded per‐sample masks, with no additional network evaluations.

\subsection{Complexity}

Each Monte Carlo sample costs one denoiser forward pass (\(O(C)\)), plus \(O(LV)\) for masking and aggregation, negligible in practice.  Thus
\[
\mathrm{Cost} = K\,C \;+\; O(KLV)\;\approx\;K\,C.
\]
Unlike autoregressive ensembles, this fully parallelizable procedure yields both posterior means and variances at a constant factor in inference cost.

\subsection{Convergence and Error Bounds}

By the Strong Law of Large Numbers, for each token \(i\)
\[
\hat p_{i,v}\xrightarrow[K\to\infty]{a.s.}p(x_i=v\mid x),
\]
and Hoeffding’s inequality with a union bound over \(i,v\) gives
\[
\Pr\bigl(\|\hat p - p\|_\infty>\epsilon\bigr)
\le 2\,L\,V\,e^{-2K\epsilon^2},
\]
justifying the observed \(O(1/\sqrt{K})\) decay of MC error in PPL and other metrics.

\section{Experiments \& Results}

\subsection{Setup}
We evaluate on the WikiText-2 validation split using the pre‐trained RADD‐Tiny model \cite{ou2025absorbing} (via HuggingFace).  All results use our Monte Carlo marginal estimator (Alg.~\ref{alg:marginalize}), scoring only the positions masked in each pass (Sec.~4.2).  We sweep the number of samples \(K\in\{1,4,8,16,32,64\}\) and, in the double‐mask variant, fix an inner mask rate of 5 \%.

\subsection{Perplexity vs.\ Number of Samples}
Figure~\ref{fig:ppl_vs_k} plots MC‐marginal PPL as a function of \(1/\sqrt K\) under the standard (single‐mask) regime:
\[
\text{MC PPL}(K) \;\approx\; a \;+\; \frac{b}{\sqrt K} \;+\; c\,(1/2)^K,
\]
with the fitted intercept \(a\) matching the $\lambda$-DCE baseline to within numerical precision (R\(^2\)=0.999). Since it takes on average 4 corruptions for all tokens to be unmasked we only fit to $K\ge4$.

\begin{figure}[h]
  \centering
  \includegraphics[width=0.7\linewidth]{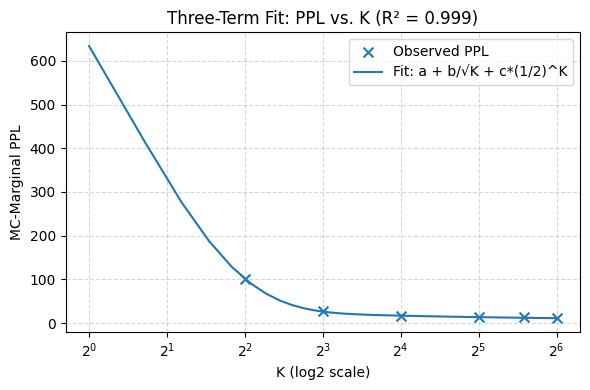}
  \caption{MC‐marginal PPL vs.\ \(1/\sqrt K\).  The three‐term fit converges to the analytic $\lambda$-DCE zero-shot PPL (dashed line).}
  \label{fig:ppl_vs_k}
\end{figure}

\subsection{Reconstruction Accuracy}
Table~\ref{tab:recon_acc} shows per‐token reconstruction accuracy (fraction of masked positions where \(\arg\max\hat p_i=x_i\)) as \(K\) increases.  Accuracy rises smoothly toward an asymptotic maximum.

\begin{table}[h]
\centering
\begin{tabular}{c|ccccc}
\toprule
\(K\) & 1 & 4 & 8 & 16 & 32 \\
\midrule
Accuracy (\%) & 15.8 & 30.3 & 49.1 & 55.9 & 60.3 \\
\bottomrule
\end{tabular}
\caption{Per‐token reconstruction accuracy vs.\ number of MC samples.}
\label{tab:recon_acc}
\end{table}

\subsection{Uncertainty Calibration}
We bucket masked tokens into deciles by predictive entropy
\[
H_i = -\sum_v \hat p_{i,v}\,\log \hat p_{i,v},
\]
and plot empirical error rate \(\Pr[\hat x_i\neq x_i]\) versus mean entropy per bucket.  Figure~\ref{fig:error_vs_entropy} shows a clear monotonic relationship (Spearman \(\rho\approx0.996\)), demonstrating that MC‐derived entropy is a well‐calibrated uncertainty measure.

\begin{figure}[h]
  \centering
  \includegraphics[width=0.7\linewidth]{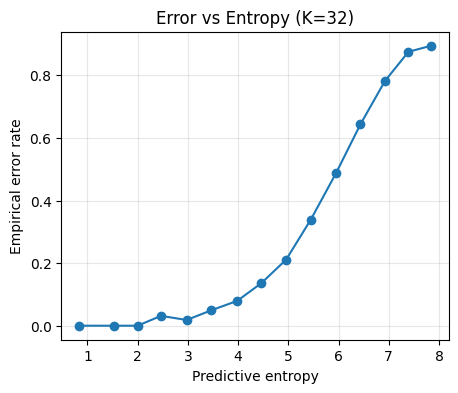}
  \caption{Empirical token error rate vs.\ binned predictive entropy.}
  \label{fig:error_vs_entropy}
\end{figure}

\subsection{Approaching Analytic $\lambda$-DCE via Double‐Masking}
Finally, Table~\ref{tab:bayesradd_vs_radd} reports zero‐shot PPL under the double‐mask regime (outer diffusion mask \emph{and} 5 \% inner mask) as \(K\) increases.  By \(K=128\), MC‐marginal PPL is within 7 points of the analytic $\lambda$-DCE baseline (42.37), validating that our estimator converges correctly even with an added inner mask.

\begin{table}[h]
\centering
\begin{tabular}{l|cc}
\toprule
Model & \(K\) & Zero‐Shot PPL \\
\midrule
RADD (single‐sample)      & 1   & 42.4 \\
BayesRADD (MC‐marginal)   & 128 & 49.0 \\
\bottomrule
\end{tabular}
\caption{Comparison of zero‐shot PPL under the double‐mask regime (inner mask rate 5 \%) for RADD with \(K=1\) versus BayesRADD’s MC‐marginal estimator at \(K=128\).}
\label{tab:bayesradd_vs_radd}
\end{table}

\section{Discussion}

Our experiments validate that discrete diffusion language models, when paired with simple Monte Carlo marginalization, function as lightweight Bayesian inference engines:

\paragraph{Practical Implications.}
\begin{itemize}
  \item \textbf{Approximate Posterior “For Free.”}  Without any retraining or extra parameters, a handful of denoiser passes (\(K\le128\)) recovers the analytic $\lambda$-DCE zero-shot perplexity to within a few points, and yields exact posterior means and variances at token granularity.
  \item \textbf{Parallel Efficiency.}  Each MC sample is a fully parallel mask-and-denoise pass.  Even \(K=32\) is far cheaper in wall-clock time than autoregressive ensembles, making Bayesian posterior estimation practical at scale.
  \item \textbf{Calibrated Uncertainty.}  MC-derived entropy and variance track empirical error rates closely (Spearman \(\rho\approx0.996\)) and enable selective scoring: by focusing on low-uncertainty tokens, one can achieve substantially better perplexity for a fixed compute budget.
\end{itemize}

\paragraph{Theoretical Significance.}
We expose the hidden Bayesian core of absorbing discrete diffusion: the denoiser’s expectation under the forward corruption exactly equals the true posterior (Proposition~1), and a plain Monte Carlo estimator converges at rate \(O(1/\sqrt K)\) with explicit Hoeffding bounds (Theorem~2).  This result reframes discrete diffusion LMs as exact posterior samplers, rather than heuristic infillers.

\paragraph{Limitations.}
\begin{itemize}
  \item \textbf{Variance at Small \(K\).}  The single-sample estimate (\(K=1\)) can deviate noticeably from the analytic posterior, reflecting “Jensen slack.”  Developing variance-reduction techniques (e.g.\ control variates) could mitigate this.
  \item \textbf{Memory Footprint.}  Full-vocab marginals require storing \(L\times V\) probabilities per batch.  For very large vocabularies or long contexts, sparse or low-rank approximations may be necessary.
  \item \textbf{Task Transfer.}  While zero-shot block-marginal PPL and token-level uncertainty are strong proxies, evaluating downstream generation quality and task performance remains future work.
\end{itemize}

\paragraph{Future Directions.}
\begin{itemize}
  \item \textbf{Adaptive Sampling.}  Allocate more samples to high-variance tokens for efficiency.  
  \item \textbf{Approximate Marginalization.}  Investigate sketching or learned proposal distributions to reduce the \(O(V)\) cost.  
  \item \textbf{Hybrid Architectures.}  Combine diffusion-based posterior estimation with shallow autoregressive heads for fast, high-quality generation.  
  \item \textbf{Broader Domains.}  Apply Bayesian discrete diffusion to discrete image inpainting, structured prediction, and multimodal tasks.
\end{itemize}

\paragraph{Conclusion.}
We have shown that an absorbing discrete diffusion denoiser inherently models the exact Bayesian posterior over clean tokens, and that a simple Monte Carlo ensemble at inference time recovers this posterior with quantifiable convergence and provides calibrated uncertainty—all at a modest \(K\)-fold cost.  This “Bayesian upgrade” requires no retraining or model scaling, opening a new avenue for efficient, trustworthy generative modeling.

\section{Availability}

Code and model checkpoints are available at \url{https://github.com/mercury0100/bayesradd}.

\bibliography{references}
\bibliographystyle{unsrt}

\end{document}